\theoremstyle{plain}
\newtheorem{theorem}{Theorem}[section]
\newtheorem{proposition}[theorem]{Proposition}
\theoremstyle{definition}
\theoremstyle{remark}
\newtheorem{remark}[theorem]{Remark}
\DeclareMathOperator{\sgn}{sgn}
\newcommand{\cX}{\mathcal X}
\newcommand{\cY}{\mathcal Y}
\newcommand{\cM}{\mathcal M}
\newcommand{\cZ}{\mathcal Z}
\newcommand{\cH}{\mathcal H}
\newcommand{\cA}{\mathcal A}
\newcommand{\cD}{\mathcal D}
\newcommand{\R}{\mathbb R}
\newcommand{\ip}[1]{\langle #1 \rangle}
\newcommand{\norm}[1]{\Vert #1 \Vert}
\newcommand{\zeros}{\bm 0}
\newcommand{\argmin}{\operatornamewithlimits{arg\,min}}
\newcommand{\sind}{\bm 1}
\newcommand{\E}{\mathbb{E}}
\newcommand{\PP}{\mathbb{P}}
\newcommand{\cone}{\operatorname{cone}}
\newcommand{\cFlin}{\mathcal F_{\text{\tiny{\textsc{lin}}}}}
\newcommand{\cF}{\mathcal F}
\newcommand{\leftbin}{\operatorname{\textsc{left}}}
\newcommand{\rightbin}{\operatorname{\textsc{right}}}
\newcommand{\Flin}{\cF_{\textrm{lin}}}
\title{Beyond Statistical Learning: Exact Learning Is Essential for General Intelligence}
\author{%
Andr\'as Gy\"orgy, Tor Lattimore, Nevena Lazi\'c, Csaba Szepesv\'ari\\
 Google DeepMind \\
 \texttt{\{agyorgy,lattimore,nevena,szepi\}@google.com}
}
\date{}
\begin{document}

\maketitle

\begin{abstract}
Sound deductive reasoning---the ability to derive new knowledge from existing facts and rules---is an indisputably desirable aspect of general intelligence.
Despite the major advances 
of AI systems in areas such as math and science,
especially since the introduction of transformer architectures, it is well-documented that even the most advanced frontier
systems regularly and consistently falter on easily-solvable deductive reasoning tasks.
Hence, these systems are unfit to fulfill the dream of achieving artificial general intelligence capable of sound deductive reasoning.  We argue that their unsound behavior
is a consequence of the statistical learning approach powering their development.  To overcome this, we contend that
to achieve reliable deductive reasoning in learning-based AI systems, researchers must fundamentally shift from optimizing for statistical performance against distributions on reasoning problems and algorithmic tasks to embracing the more ambitious exact learning paradigm, which demands correctness on all inputs.
We argue that exact learning is both essential and possible, and that this ambitious objective should guide algorithm design.
\end{abstract}

\section{Introduction}

\emph{Deductive reasoning}, the ability to derive new knowledge from existing facts and rules, is a fundamental aspect of
intelligence.
Deductive reasoning allows a system to derive a vast number of specific conclusions from a small set of general rules,  even if those rules
are completely new to the system.
As such,
systems equipped with deductive reasoning enjoy
an unparalleled form of informational leverage.
Such a system can also discover whether a set of rules is self-contradictory, which is essential for problem solving.
The list of potential practical applications of systems that use natural language to interact with users where deductive reasoning is useful is limitless, 
with examples including everyday problems such as compiling shopping lists for groceries based on plans for the next days, or reliably navigating tax laws, or more specific tasks like calculating insurance premiums and payouts based on policy rules
or determining precise shipping costs and delivery options from logistical tables.

A crucial aspect of deductive reasoning is that all reasoning steps and conclusions have to be correct, so it leaves no room for errors. While such error-free behavior is inherent to classical, rule-based AI systems (provided that they are free of programming errors), error-free behavior is not guaranteed for modern AI systems.
This is because these systems are powered by large language models (LLM) trained to minimize next-token prediction error against some data 
\citep[e.g.,][]{gpt424,team2023gemini,TheC3,grattafiori2024llama3herdmodels}
and the statistical approach guarantees good results 
only on average and only over the distributions close to that of the training data, 
that is, only under ``small covariate shift'' 
\citep{quinonero2022dataset,pmlr-v9-david10a}.

Accordingly, frontier AI systems are well-documented 
to regularly and consistently falter on easily-solvable deductive reasoning tasks.
For example,
\citet{nezhurina2025alicewonderlandsimpletasks} report failures
on elementary math word problems that require 
    skills to add/subtract small integers
    and an understanding of basic family relations;
\citet{mccoy2024embers,mccoy2024language} report failures 
on other 
    simple deductive tasks such as counting, article swapping, shift ciphers, 
    and performing calculations that involve one addition and one multiplication.
Studying LLMs' planning abilities,
\citet{TravelPlanner24} finds that GPT-4 achieves 0.6\% success rate in various travel planning problems.
This agrees with the experience of 
\citet{valmeekam2025systematic} who wrote: ``The lackluster performance of LLMs on even the easiest static test set leads us to continue to believe that
planning cannot be generally and robustly solved by approximate retrieval alone.''%
\footnote{By approximate retrieval they mean using LLMs.}
For other works of this type, consult
\citet{
valmeekam2023planning,
dziri2023faith,
jiang2024peek,
mirzadehgsm,
taghanaki2024mmluproevaluatinghigherorderreasoning,
opedal2024mathgap,
Huckle2025-rm}.
While newer models often fix the problems reported (at least some of them), 
it is unclear whether the improvements are foundational, or they are more localized and we can expect to see new problems discovered and reported. So far, history shows that despite all the advances, people always discover new problems of the same character as the old ones
\citep[e.g.,][]{Huckle2025-rm}, which raises doubts as to whether the advances are foundational and whether continuing this process for developing new models without any changes will result in all the problems fixed.

Experiments with synthetic reasoning (algorithm learning) tasks  corroborate these findings. For example,
\citet{lee2024} and \citet{mcleish2024transformers} train (decoder-only) transformers, the neural networks powering LLMs.
The task is either multidigit addition, or multiplication. 
While these works report major advances, 
despite various simplifications to the task
and task specific interventions,
perfect ``length-generalization'' is never achieved.
While studying their capabilities to learn 
finite-state semi-automata, \citet{liutransformers23}
find that transformers learn ``statistical shortcuts'' to them,
which ``generalize near-perfectly in-distribution [..] \{but\} lack out-of-distribution robustness [..].''
Another work, that is particularly relevant for us, is due to \citet{zhang2023paradox} who demonstrates that 
(bidirectional) transformers 
fail to learn to solve simple reasoning problems
even though they do have the representation capacity.
In particular, the authors of this latter work observe
that for various non-adversarially chosen distributions,
even if the trained models perform perfectly on new test examples drawn from the training distribution, their performance drops to  chance level when tested on an alternative distribution.
Similar results are also reported, among others, by \citet{csordas2021devil,deletangneural23chomsky} and \citet{thomm2024limits}.

Besides documenting failures, much research has aimed at mitigating future failures.
Techniques for doing this include
manipulating training data 
either to reflect the ever-growing list of benchmarks,
or because the changed data is believed (and sometimes proved) to represent an easier learning task
\citep{oren-etal-2020-improving,wang2025learning},
or adjusting the algorithms/losses
 \citep{fuhungry,deletangneural23chomsky, deepseek24,thomm2024limits,Han2025-vj},
or changing the model class (i.e., the architectures of the underlying neural networks)
\citep[e.g.,][]{
bahdanausystematic29,
dehghaniuniversal19,raganato-etal-2020-fixed,you-etal-2020-hard,ontanon-etal-2022-making,fuhungry,zhang2023unveilingtransformerslegosynthetic,Csordas2023-eb}.

The extensive work devoted to documenting and addressing the failures of these AI systems 
on ``off-distribution'' data 
clearly indicates
that developers expect their
systems to be able to generalize to new, unseen data regardless of its origin.
While this is clearly impossible to achieve
in general (e.g., when the solution to a problem is only defined through data, as in image or speech recognition), 
the case of teaching models to perform reasoning does not need to be hopeless:
In reasoning, the rules are clear and, as such, 
for well-formed inputs the set of correct responses is well-defined.
Hence, we posit that when models are taught to reason, expecting them to perform well on new well-specified inputs is reasonable and highly beneficial,
even if in a strict sense generalization to \emph{all} novel scenarios is not necessarily needed \emph{in practice}.
Indeed, on the one hand, flawless deduction is achievable, as formal systems demonstrate.
Achieving this perfection
ensures safety and reliability, 
which has the added benefit of
relieving designers 
from the challenging task of deciding which failures are acceptable--a problem with no perfect answers.
To overcome this problem, we posit that reliable, widely applicable intelligent systems should be built to perform deductive reasoning flawlessly for all distributions over well-formed inputs.
This in turn means that they need to work flawlessly for \emph{all individual} well-formed inputs.
This demand is not new; in fact, we see this as essentially the goal of building AI systems that exhibit \emph{systematic generalization} \citep{fodor1988connectionism,bahdanausystematic29,Csordas2023-eb}.
However, unlike \citet{fodor1988connectionism} whose position is that systematicity is a property of minds and that connectionism (modeling cognitive processes via artificial neural networks) cannot achieve this, we avoid philosophical arguments and instead 
take a pragmatic standpoint stating that some form of systematicity is a property that general AI systems that require deductive reasoning should have and that we need to develop learning algorithms that achieve this.
At the same time, we also propose to replace the somewhat vague notion of systematicity with the more general but crisply defined notion of 
\emph{exact learning} due to \citet{angluin1988queries},%
\footnote{
An even earlier attempt to formalize a form of exact learning emerged in the context of language acquisition \citep{Gold67}, which was later expanded by \citet{Osherson1984-oe} in the context of learning \emph{natural} languages.
}
again, shifting the emphasis to the need of designing learning algorithms of a certain kind.

\emph{Exact learning}, as introduced by 
\citet{angluin1988queries}, studies
the problem of designing methods that learn to apply rules/algorithms correctly on \emph{all} possible inputs.
As noted beforehand, exactly learning is not an explicit goal in statistical learning.%
\footnote{In the terminology of \citet{fodor1988connectionism}, statistical learning is ``empiricism''. Here, we will not argue against all forms of empiricism. On the contrary, we think some form is necessary. However, we argue that
sticking to the purest form of empiricism is likely to slow down progress.
}
As explained above, we argue that the widespread adoption of statistical learning paradigms in AI systems, which optimize for performance against distributions rather than guaranteeing universal correctness, is fundamentally misaligned with the demands of exactness.
We contend that this misalignment is at the heart of the documented unpredictability and limitations of current advanced AI systems. 
Adopting the more stringent exact learning criterion is not merely a theoretical preference; it is a necessary pivot that promises to unlock new research directions, enabling the next significant leap towards closing the gap between current AI systems—which excel at language and lexical knowledge but are unpredictable in tasks requiring logical inference—and the elusive goal of general intelligence.
We thus take the following position:
\begin{quote}
\textbf{
To achieve reliable deductive reasoning in learning-based AI systems, 
researchers must fundamentally shift from optimizing for statistical performance against distributions of reasoning problems and algorithmic tasks to embracing the exact learning criterion, which necessitates universal correctness. }
\end{quote}

The astute reader may ask why the focus is on AI systems that rely on learning. This is because, in order for the system to be useful, it needs to interact with the real world using natural language and  other I/O modalities (e.g., a robot interacting with the world). Since natural language is highly variable, our best chance of achieving this is through learning systems that can capture its semantics, including differences in phrasing. Another advantage of learned natural language systems over formal systems is that formal systems require all knowledge to be explicitly encoded, which quickly becomes very cumbersome and makes such systems brittle \citep{clark2021transformers}. With natural language, we may hope to overcome some of the knowledge gaps using semantics. For example, given that "all buildings have windows", we could conclude that "my house has windows" without explicitly adding that "a house is a building" (admittedly, determining when such implicit rules can be applied is a challenge).

The rest of the paper is organized as follows:
In \cref{sec:notation} we review the foundations of statistical learning and introduce exact learning.
In the next section (\cref{sec:exch}), we elaborate on what goes wrong 
when one attempts to use the statistical approach to achieve exact learning.
The most important message is that good statistical performance does not imply good performance in exact learning, 
and hence we cannot expect statistical learners to do well on exact learning tasks in general.
To illustrate how significant this mismatch is,
we prove that even for simple problems like finding the coefficients of a linear classifier over a binary hypercube,
statistical learners will pick up ``statistical shortcuts'' and fail to acquire the target rule, unless 
the learner and the distribution is carefully tailored to the task
or an exorbitant number of training examples are used. We give a new lower bound that characterizes the critical number of examples needed by a statistical learner to achieve nontrivial performance on exact learning. 
We next use this to give lower bounds on this critical sample size
for symmetric learners that is \emph{specific for both the 
target hypothesis and the training distribution}. Importantly, these results apply to neural networks like  transformers when trained with gradient methods. 
In \cref{sec:prom} we describe a number of potential approaches 
to address the previously mentioned challenges.
\cref{sec:alternative} discusses some plausible arguments against pursuing exact learning, complete with brief  counter-arguments.
Finally, conclusions are drawn in \cref{sec:conc}.

\section{From statistical to exact learning}
\label{sec:notation}

At an intuitive level, the principle behind statistical learning
is that if model performance is evaluated on average with respect to some distribution, then 
designing models that perform well on a set of training data sampled from \emph{the same distribution} will lead to models with good performance. 
We will now describe the formal mathematical model behind statistical learning as this will facilitate our discussion of the similarities and differences between statistical and exact learning.

Learning can be formulated as the goal of keeping losses small.
To describe this let $\ell(z, \theta)$ be the loss when a model with parameters $\theta \in \Theta \subset \R^d$ is faced with example $z$ coming from an example space $\cZ$. Generally speaking $\Theta$ is a set of possible model parameters (e.g., weights of a neural network) and $\cZ = \cX \times \cY$ where $\cX$ is some space of inputs (e.g., all images, all character sequences) and $\cY$ is a space of labels (labels are interpreted here as anything we want to predict, including the class of an image, the value of the next token in text prediction, or a whole token sequence,
or the reward in a reinforcement learning setting).
The learning system is assumed to have access to examples $Z_1,\ldots,Z_n$ sampled (usually) independently from an unknown distribution $\mu$ on $\cZ$. In statistical learning the objective is to find parameters $\theta \in \Theta$ minimizing the expected loss
\begin{align}
\label{eq:loss}
    L_\mu(\theta) = \E[\ell(Z_1, \theta)]\,.
\end{align}
The fundamental principle in statistical learning is that
in great generality it suffices to approximately minimize the empirical loss, which is
$L_n(\theta) = \frac1n \sum_{i=1}^n \ell(Z_i,\theta)$.
At least for well-behaved losses, the concentration of measure phenomenon shows that with overwhelming probability 
$L_\mu(\theta)-L_n(\theta)$
 is of order $O(1/\sqrt{n})$ or less for any prespecified $\theta$ \citep{boucheron2003concentration}. 
 It is not a great leap from here to argue that learning algorithms approximately minimizing $L_n(\theta)$ also approximately minimize $L_\mu(\theta)$. 
 Methods for making this argument rigorous include uniform concentration \citep{shalev2014understanding,dudley2014uniform}, algorithm-dependent methods like stability \citep[Chap 7]{zhang2023mathematical} and PAC-Bayes \citep{alquier2024user}.

The statistical learning principle works well when the data arising in deployment has the same distribution as the training data. This is rarely how things work in practice.
In reality, and in particular in the training of AI systems based on LLMs, system designers try hard to collect data that appears to have ``good coverage''. 
Then they minimize the empirical loss and subsequently spend considerable effort testing the system on an ever-expanding set of benchmarks; for example, the reports accompanying the release of each new LLM always contain performance report on a large set of benchmarks \citep[e.g.,][]{gpt424,team2023gemini,TheC3,grattafiori2024llama3herdmodels,team2025gemma}.\footnote{An alternative popular approach to test the quality of LLMs is to let users interact with multiple systems (without knowing their identity) and then indicate their preferences \citep{chiang2024chatbot}. The models are then ranked based on Elo scores. The results reflect the preferences of the population that self-selects to visit the  website.}
When new benchmarks are identified where a model behaves poorly, more data is added, the model is retrained and the whole process is repeated. The explanation for the failure is that the model cannot be expected to generalize to parts of the input space not ``covered'' by the training distribution, and also because the loss functions used in training and test might be significantly different.\footnote{This is further complicated by the multi-step nature of the training process, containing pre- and post-training, as well as supervised fine-tuning, which may all come with their own datasets and loss functions.}

We claim that the above iterative process of training models and adding new training data which tries to fill the gaps in the test performance is a symptom of the mismatch between the statistical approach to machine learning and the 
 true underlying hidden goals of the developers of these systems, especially those that have a general purpose. 
 Indeed, having a general purpose system means that even if there \emph{were} some distribution underlying the future inputs to be used with the system, this distribution $\rho$ is not available to the developer. As such, the developer is forced to consider \emph{all possible  input distributions}, while keeping the conditional distribution of the labels, $\mu_{Y|X}$, fixed.
The goal of such a developer is what we identify as the \emph{goal of exact learning}, which is to minimize
 \[
  L^*_{\mu_{Y|X}}(\theta)= \sup_{x\in \cX} \int \ell( (x,y),\theta) \mu_{Y|X}(dy|x)\,.
 \]
 The integral here is only over the randomness in the labels (that is, it is a conditional expectation over the labels given the input $x$). The supremum over all $x$-values means that $L^*_{\mu_{Y|X}}(\theta)$ can only be minimized when the model determined by $\theta$ generalizes over the entire range of $x$-values.
 For example, in a deductive natural language reasoning task,
 an example $z=(x,y)$ would be so that $x$ is the a well-formed reasoning task specification, $y$ is a possible response,
 and $\mu_{Y|X}(\cdot|x)$ is a distribution supported on the set of correct answers. Then choosing $\ell( (x,y),\theta )$ to be the indicator of whether the answer produced by the model with weights $\theta$ is semantically different from $y$, we get that a learner
 achieving zero loss produces correct answers on any reasoning task in $\cX$.
 As compared to the standard definition of exact learning
 which uses fixed labeling functions (i.e., $\mu_{Y|X}(\cdot|x)$ is a Dirac),
 ours is more general and aligns better with ``messy'' real-world tasks. Overall, we do not think this generality gives rise to essential differences. 
 Another difference is that we allow other losses than the zero-one loss. While this is also not the focus and soon we will switch to the zero-one loss, using the definition with say, real-valued labels and some loss that fits this setting (i.e., the squared loss) allows one talk about exact learning in regression settings. In this case, the goal can be seen as controlling worst-case prediction errors, which is a goal often needed in numerical analysis or reinforcement learning and has been studied in approximation theory and reinforcement learning
 \citep{lorentz1966approximation,du2020good,lattimore2020learning}.
 
 With the above definition in hand, we
 can now interpret the meaning of a learner picking up a  \emph{statistical shortcut}, which is a phrase often used in the literature \citep[e.g.,][]{geirhos2020shortcut,liutransformers23,zhang2023paradox,taghanaki2024mmluproevaluatinghigherorderreasoning}.
 In what follows we will say that a learner learned a \emph{statistical shortcut} when it has a ``large''   loss relative to the best possible loss. Here, the meaning of large needs to be adjusted to the context.
 When working with the zero-one loss we choose large to mean the loss that one achieves when picking labels uniformly at random, independently of the input. 

It is not hard to see that $\sup_{\rho} L_{\rho\otimes \mu_{Y|X}}(\theta) =L^*_{\mu_{Y|X}}(\theta)$; that is, exact learning is identical to the worst-case out-of-distribution generalization. 
Hence, a statistical learner that does well against \emph{all possible} distributions is a good exact learner and vice versa.
However, there is no guarantee that a statistical learner that will do well in the statistical sense will also do well on exact learning (as we also shall see in the next section).
This also extends to the case of learners that do well against a few select distributions. This is why the standard practice of developing algorithms that do well on a fixed set of benchmarks does not guarantee good performance on a ``new'' benchmark.

Learning to perform well on a well-defined set of deductive reasoning tasks described in some formal language is a problem of \emph{learning an algorithm}.
Examples of algorithm learning problems include
learning to perform a subset of all arithmetic operations, 
learning to solve a system of formally described linear equations,
learning to solve arithmetic word problems,
learning to follow instructions/some rules,
learning to decide whether a proof written on some formal language is correct, to name a few. 
When it comes to evaluating the success of  an algorithm, one may consider the average rate of correctness---this would correspond to using a statistical approach and requires a distribution that reflects the use of the algorithm when it will be deployed. Since algorithms are meant to be general, such distributions are not (necessarily) available, in which case it is natural to require that the algorithm is \emph{always correct}, leading to the exact learning criterion.

\paragraph{Binary classification with linear classifiers}
The most celebrated result of statistical learning is that learning is largely insensitive to the distribution of inputs.
Intuitively, the reason for this is because the training set examples come from the distribution used for assessing performance. Hence, even if some examples are rare and thus will not likely to appear in a fixed size training set, downstream performance will not be effected by this, since they are also not likely to appear at ``test time''. 
In what follows, we explain the details of this result in the context of binary classification with linear classifiers, a simple setting 
that allows us to illustrate how large the gap between exact learning and statistical learning can be.

In linear binary classification
the input set $\cX$ is a subset of a Euclidean space $\R^d$ and the label set is binary: $\cY = \{0,1\}$.
For $\theta = (w,b)\in \R^d \times \R\equiv \R^{d+1}$,
a linear classifier $f_\theta$ over $\cX$ is a binary-valued map that returns $1$ for any point $x$ in the halfspace $\{x\,:\, \ip{w, x} + b \geq 0\}\subset \R^d$. We let $\Flin(\cX)$ to denote the set of linear classifiers over domain $\cX$.
The loss function $\ell$ is also binary-valued and at $((x,y),\theta)\in \cZ \times \R^{d+1}$ returns one if $f_\theta(x)\ne y$ and zero otherwise. 
Assume now that the distribution $\mu$ of the examples is such that for any $(x,y)\sim \mu$, $y = f_{\theta^*}(x)$ for some $\theta^*\in \R^{d+1}$ that is unknown to the learner. 
The expected loss \eqref{eq:loss} of a classifier with 
parameter $\theta\in \R^{d+1}$ is the probability of error on a random input that has the same distribution as the inputs in the training data.

Now, if one cares about finding a classifier with a small probability of error on  a new random input, as it turns out, it is sufficient to find a parameter vector $\theta_n\in \R^{d+1}$ that makes the empirical loss $L_n(\cdot)$ the smallest possible. Since we assumed the labels are generated by a linear classifier, it is always possible to find a weight vector that makes the empirical loss zero: $L_n(\theta_n)=0$. The so-called VC dimension
of the set $\Flin(\cX)$ is known to be $O(d)$ and thus
classical learning theory \citep[Thm 4.17, Prop 4.18]{zhang2023mathematical}
shows that for any $n \geq 1$ and $\delta \in (0,1)$, with probability at least $1 - \delta$,
\begin{align}
L_\mu(f_{\theta_n}) = O\left( d \log(n/\delta)/n \right)\,.
\label{eq:vc}
\end{align}
Note how demanding a higher success rate (lower $\delta$) has a relatively small cost.
Moreover, the upper bound decreases rapidly as $n$ increases, and only $\Omega(d \log(d/\delta))$ examples are need before the expected \emph{on-distribution}
performance becomes better than choosing labels uniformly at random.
All this holds regardless of the size of $\cX$.
In particular, there is no requirement to see all elements of $\cX$ during training.
Furthermore, all these conclusions hold regardless of the choice of the input distribution and the classifier generating the labels.%
\footnote{The astute reader may wonder about the relevance
of this and similar uniform-convergence-based results 
in the age when neural networks are all the rage. The reader is right to question this, though we note one still expects that increasing the sample size will lead to a polynomially improving performance, as attested by the many works on ``scaling laws'' for neural networks \citep{Bahri24scalinglaws}.
}
By contrast, in exact learning, a learner with the same information cannot be expected to perform well without seeing \emph{almost all possible inputs}.
Intuitively, this is because the definition requires the learner to \emph{generalize out-of-distribution}
in the most extreme sense. In particular, $L_{\mu_{Y|X}}^*(\theta)  = \sind(f_\theta\neq f_{\theta^*})$.

\section{Naive statistical approaches to exact learning fail}
\label{sec:exch}
We now explore the failure of classical statistical learning recipes
to learn exactly.
In the most naive approach,
a developer would collect a dataset from some input distribution. The ideal statistical model for this is to assume that the data is independently and identically distributed, as in the previous section. 
We start with an example that shows that a distribution that has full support over the whole space is far from sufficient in this case:
\begin{restatable}{proposition}{silly}
\label{prop:silly}
Let $\cX= \{0,1\}^d$, and consider the two linear classifiers $f^*_1(x) = \sind(x = \zeros)$ and $f^*_0(x)\equiv 0$.
Let $n = o(2^{d})$ and let $X_1,\dots,X_n$ be drawn independently from uniform distribution over $\cX$. Then any learner fails to identify the target function $f^*$ with constant probability either when $f^*=f^*_0$ or when $f^*=f^*_1$.
\end{restatable}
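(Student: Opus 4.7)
The plan is to run the standard indistinguishability (``needle in a haystack'') argument, exploiting the fact that $f^*_0$ and $f^*_1$ agree everywhere on $\cX$ except at the single point $\zeros$. If $\zeros$ never appears among the training inputs, the observed labels are identical under the two candidate targets, so no learner can reliably tell them apart.

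Concretely, first I would define the event $A = \{\zeros \notin \{X_1,\ldots,X_n\}\}$ and lower bound its probability: since each $X_i$ is uniform on $\cX$ with $|\cX| = 2^d$, independence gives $\Pr(A) = (1 - 2^{-d})^n \geq 1 - n\cdot 2^{-d} = 1 - o(1)$ under the hypothesis $n = o(2^d)$. Next I would observe that on the event $A$ the labels $Y_i = f^*(X_i)$ are all zero regardless of whether $f^* = f^*_0$ or $f^* = f^*_1$; consequently the conditional joint law of the training set $(X_i, Y_i)_{i=1}^n$ given $A$ is the same under both targets. It follows that for any (possibly randomized) learner returning a hypothesis $\hat f$, the conditional distribution of the single bit $\hat f(\zeros)$ given $A$ does not depend on which of the two targets generated the data.

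Setting $p = \Pr(\hat f(\zeros) = 1 \mid A)$, one has $\hat f \neq f^*_0$ on $A$ whenever $\hat f(\zeros) = 1$, and $\hat f \neq f^*_1$ on $A$ whenever $\hat f(\zeros) = 0$; therefore
\[\Pr(\hat f \neq f^*_0 \mid f^* = f^*_0) \geq p\cdot\Pr(A), \qquad \Pr(\hat f \neq f^*_1 \mid f^* = f^*_1) \geq (1-p)\cdot\Pr(A).\]
Since one of $p$ and $1-p$ is at least $1/2$, the failure probability for at least one of the two targets is at least $\Pr(A)/2 \geq 1/2 - o(1)$, a positive constant, as claimed.

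There is no real technical obstacle: the whole argument is a one-shot coupling of the observation laws under the two hypotheses. The only points requiring mild care are that the learner may be randomized and may output a hypothesis outside $\{f^*_0, f^*_1\}$, but both concerns are handled transparently by conditioning on $A$ and tracking only the single bit $\hat f(\zeros)$.
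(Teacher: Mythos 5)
Your proof is correct and uses essentially the same indistinguishability argument as the paper, which derives Proposition~3.1 as a corollary of Theorem~3.2: condition on the event that no sample lands in the disagreement region $\{\zeros\}$, note that the data law is then identical under both targets, and conclude that any learner must err on at least one of them. Your direct instantiation even yields a slightly better constant ($1/2-o(1)$ rather than the $1/4$ the paper's general bound gives), but the underlying idea is the same.
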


This result illustrates the separation between the hardness of exact learning and statistical learning. For example, from \cref{eq:vc} it follows that $\tilde O(d^2)$ samples are sufficient to bring the average loss below $1/d$ with high probability, while the proposition shows that
any learner that exactly identifies the correct labeling hypothesis with probability at least $1/2$ needs to see $\Omega(2^d)$ examples,
which is essentially all of the possible inputs.
While here the input space is finite, even for moderately large $d$ (such as $d=100$), the number of possible inputs is an astronomical number. In connection to this, we note that
we choose the domain $\{0,1\}^d$ as in the applications we can think of, the input space has a similar combinatorial structure and is subject to the same explosion as what we see here.%
\footnote{
By applying problem transformations, as we shall see it later, it is perhaps possible to keep the domain finite, though we are not hopeful that these reductions would allow to reduce the size of the domain to a \emph{small} finite number as the domains one ends up with are still combinatorially structured.
}
The example also demonstrates that \emph{coverage of the input distribution} in terms of having a \emph{full support over the input domain} is far from sufficient for exact learning. 
None of this is surprising. After all, under the input distribution for two distinct labeling functions, the learner only sees one type of label until they see almost all the points in the domain. 
Yet, this example should remind everyone that 
lack of coverage of the input distribution (which is almost always what is blamed when learning does not ``succeed'') is only one of the many  reasons that a learner will fail to generalize to unseen examples.
Let $\cH$ be a set of maps from $\cX$ to $\{0, 1\}$.
Proposition~\ref{prop:silly} is a corollary of the following
theorem, which shows
that for \emph{any} input distribution $\rho$ on $\cX$ there is a lower bound on the number of samples required for successful exact learning by \emph{any} learner provided that the learner faces the challenge of identifying an arbitrary classifier from $\cH$.

\begin{restatable}[All learners fail on some task for small samples]{theorem}{alllearnersfail}
\label{thm:dis}
Fix a distribution $\rho$ over $\cX$ and let $X\sim \rho$.
With a sample from $\rho$ that has fewer examples than
$1/(2\inf_{h,h'\in \cH, h\ne h'} \PP(h(X)\ne h'(X)))$,
 all learners fail to identify 
the function used to label the sample for at least one of the labeling functions $h^*\in \cH$
with probability of at least $1/4$. 
\end{restatable}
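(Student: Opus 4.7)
The plan is to reduce exact learning to a two-point hypothesis test and apply a Le Cam-style argument. Set $p^* = \inf_{h \ne h' \in \cH} \PP(h(X) \ne h'(X))$ and suppose $n < 1/(2 p^*)$. First I would choose a distinct pair $h_0, h_1 \in \cH$ whose disagreement probability $q = \PP(h_0(X) \ne h_1(X))$ is close enough to the infimum that the strict inequality $n q < 1/2$ still holds; such a pair exists because the infimum can be approached arbitrarily, and in the easy case where $p^* = 0$ we can make $q$ as small as we please. Then I would consider the Bayesian setup in which the true target $h^\ast$ is drawn uniformly at random from $\{h_0, h_1\}$, independently of the data $X_1, \dots, X_n \overset{\text{iid}}{\sim} \rho$, and the learner is shown the labeled sample $((X_i, h^\ast(X_i)))_{i=1}^n$.

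The key observation is that on the event $E = \{h_0(X_i) = h_1(X_i) \text{ for all } i \le n\}$, the labeled sample has exactly the same joint distribution under $h^\ast = h_0$ and under $h^\ast = h_1$. Therefore, conditionally on $E$, any (possibly randomized) learner's output is independent of $h^\ast$, so its probability of correctly identifying $h^\ast$ is at most $1/2$. Unconditioning, the probability of misidentification is at least $\PP(E)/2$. Since the $X_i$ are i.i.d., $\PP(E) = (1 - q)^n \ge 1 - n q > 1/2$ by Bernoulli's inequality and our choice of $q$, giving an average error probability (over the uniform prior on $\{h_0, h_1\}$) strictly greater than $1/4$. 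Averaging then forces at least one of $h_0, h_1$ to be an $h^\ast \in \cH$ on which the learner errs with probability at least $1/4$.

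Two minor points deserve care. First, the argument must accommodate randomized learners: the conditional-independence-on-$E$ step is exactly where this is handled, since any internal randomness is independent of $h^\ast$ as well. Second, one needs to verify that the choice of $q$ is feasible when the infimum is not attained; this is immediate from $n p^* < 1/2$ by picking $q \in (p^*, p^*(1+\varepsilon))$ with $\varepsilon > 0$ small enough that $n q < 1/2$. I do not foresee a substantial obstacle: the entire proof is a short information-theoretic coupling followed by a one-line Bernoulli bound. The only conceptually delicate step is recognizing that it is the \emph{minimum pairwise disagreement probability} inside $\cH$ — not the coverage of $\rho$ or the cardinality of $\cX$ — that governs the sample complexity of exact identification, a fact that immediately yields Proposition~\ref{prop:silly} by taking $\cH = \{f_0^*, f_1^*\}$ with $p^* = 2^{-d}$.
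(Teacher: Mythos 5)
Your proposal is correct and uses essentially the same argument as the paper: both rest on the observation that when all $n$ inputs land in the agreement region of a near-minimally-separated pair $h_0,h_1$, the labeled samples coincide and the learner cannot distinguish them, followed by the Bernoulli-type bound $(1-q)^n \ge 1-nq > 1/2$. The only (cosmetic) difference is that you phrase the final step as a Bayesian two-point (Le Cam) average over a uniform prior, whereas the paper bounds $\PP(\cD_{h}=\cD_{h'}) \le \PP(\cA(\cD_h)\ne h)+\PP(\cA(\cD_{h'})\ne h')\le 2\phi_n$ directly; your explicit treatment of randomized learners and of an unattained infimum is a welcome bit of extra care.
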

The proof of this and all remaining results are given in the appendix.
The idea is similar to that underlying Proposition~\ref{prop:silly}. If for two functions $h\ne h'$, no example is seen from the \emph{disagreement region} $\{x \in \cX:h(x)\ne h'(x)\}$, the learner has no way of distinguishing between whether the sample was labeled with $h$ or $h'$. 
The theorem gives a lower bound on the minimum sample size needed for exact learning that depends on the interplay between the input distribution and the hypothesis class $\cH$. Note that there is an elementary upper bound that matches the lower bound up to an $O(\log |\cH|)$ factor \citep{shalev2014understanding}.

The example in \cref{prop:silly} is perhaps unsatisfactory as the slow learning happens because one of the labels is not seen for a long time. 
Equipped with the above result, we now give an example where this is not the case, and the input distribution $\rho$ is the uniform distribution which is (in many senses) the best uninformed choice in terms of coverage of the input space, which we keep as $\cX = \{0,1\}^d$. Now take any $h,h': \cX \to \{0,1\}$ functions that return $0$ on about a constant proportion of the domain elements and differ on only $o(2^d)$ elements.
The above theorem immediately shows that no learner is able to reliably solve the exact learning problem in polynomial time in $d$ where the labeling function could be either $h$ or $h'$. 
This problem can be instantiated with linear labeling functions, as well: 
Assume for simplicity that $d$ is even and let $m=d/2$. Treat the $d$-component binary vector as the binary digits of two $m$-bit numbers. 
Let the two numbers (as integers) for $x\in \{0,1\}^d$ be $\leftbin(x)$ and $\rightbin(x)$.
Let $h_{\ge}(x) =\sind(\leftbin(x)\ge \rightbin(x))$ and $h_{>}(x) = \sind(\leftbin(x)>\rightbin(x))$.
It is not hard to see that 
{\em (i)} both $h_{\ge}$ and $h_{>}$ are linear classifiers
and when $X$ has law $\rho$,
{\em (ii)} $\PP( h_{\ge}(X)\ne h_{>}(X) ) = \PP( \leftbin(X) = \rightbin(X) ) = 2^{-m} = 2^{-d/2}$;
and
{\em (iii)} $\PP( h_{\ge}(X)=0 ) \approx \PP( h_{>}(X)=0 ) \approx 1/2$.
Summarizing: $\rho$ has full support, the hypotheses are simple (linear) with balanced  label distributions, and no learner can tell $h_{\ge}$ apart from $h_{>}$ unless they see $2^{d/2}$ samples or more. Hence, even simple learning problems with seemingly ideal conditions can be very hard if the goal is exact identification.
One does not need neural networks to find that learners will learn 
\emph{statistical shortcuts}. Moreoever, on some problems, 
the shortcuts are unavoidable and thus
it is not the fault of a particular learner that they learn a shortcut!

\paragraph{Task-specific lower bounds: Symmetric learners pay a price}
A weakness of the previous result is that it says nothing about how long a specific learner will take on a specific learning problem, which is arguably a question of more relevance to practice. After all, we only need one learner to succeed on the single task of becoming good at natural language reasoning.
As it turns out, the previous result allows us to derive a similar lower bound for any specific \emph{symmetric} learner on any given learning problem.

Most learners used in practice have various symmetries; formally, this means that they are equivariant to various transformations of their inputs.
One such property is label symmetry: 
A learner $\cA$, which maps training data to predictors, is said to have \emph{label symmetry} if 
for any $(x_1,y_1),\dots,(x_n,y_n)\in \cZ$ it holds that
$\cA( (x_1,1-y_1),\dots,(x_n,1-y_n) ) = 1- \cA( (x_1,y_1),\dots,(x_n,y_n) )$.
That is, flipping the labels in the training data causes the learner to flip their predictions.
Another such property is variable symmetry.
A learner $\cA$ is said to have \emph{variable symmetry} if reordering the components of the input vectors in the data that the learner sees 
makes the learner return a function 
	that returns the same values on some input $x$ 
	as the function it learns on the original input 
		when that function is applied to $x$ but with the component's of $x$
			reordered the same way as the components were in the input vector.			
When a learner randomizes (i.e., because it uses random initialization, or picks training examples at random), we extend these concepts accordingly to apply to the probability kernels that defines them in the natural way.			
With this, we can see that multilayer perceptrons trained with various gradient methods have variable symmetry (see e.g. \citet{ng2004feature}). 
They also have label symmetry when the last layer weights are initialized from a distribution that is centrally symmetric. 
Transformers trained with gradient descent do not have variable symmetry, but if they are trained on a binary classification problem, they would have label symmetry.
Moreover, they also have input alphabet symmetry (or token space symmetry): They do not treat any of the input tokens as special.
More generally, given a group $G$ of symmetries, an algorithm is said to be $G$-symmetric if
transforming the data with some element $g$ of $G$ and then applying the learning algorithm gives 
the same outcome as applying the learning algorithm on the untransformed data and then taking the (appropriately defined) action of $g$ on the resulting function: $\cA( g \cD ) = g \cA( \cD)$.
With this, one readily derives the following result from \cref{thm:dis}:
\begin{restatable}[Distribution and hypothesis specific critical sample size for symmetric learners]{theorem}{symlearnersfail}
\label{thm:dis2}
Let $\cA$ be a $G$-symmetric learner, $\rho$ a $G$-symmetric distribution,  $X\sim \rho$. Let $h: \cX \to \{0,1\}$.
With a sample from $\rho$ that has fewer examples than
$1/(2\inf_{g\in G \setminus \{\mathbf{1}_G\}} \PP(h(X)\ne gh\,(X)))$,
$\cA$ will fail to return $h$
with probability of at least $1/4$
when used on a sample where the inputs are generated from $\rho$ and the labels are from $h$.
\end{restatable}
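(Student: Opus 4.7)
The plan is to apply \cref{thm:dis} to the orbit $\cH = \{gh : g \in G\}$ of $h$ under $G$, and then to leverage $G$-equivariance to transfer failure from whichever orbit element is hard for $\cA$ back to $h$ itself.

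First, I would compute the disagreement probabilities associated with $\cH$. For $g_1, g_2 \in G$ with $g_1 h \ne g_2 h$, the change of variable $X' = g_1^{-1} X$---valid because $\rho$ is $G$-invariant---yields $\PP(g_1 h(X) \ne g_2 h(X)) = \PP(h(X') \ne (g_1^{-1} g_2) h(X'))$. Taking the infimum over such pairs therefore equals $\inf \{\PP(h(X) \ne gh(X)) : g \in G \setminus \{\mathbf{1}_G\},\ gh \ne h\}$, which is at least the quantity $\alpha := \inf_{g \in G \setminus \{\mathbf{1}_G\}} \PP(h(X) \ne gh(X))$ appearing in the theorem. (If $\alpha = 0$ the stated bound $1/(2\alpha)$ is vacuous; otherwise $gh \ne h$ for every $g \ne \mathbf{1}_G$ and the two infima coincide.) Plugging this into \cref{thm:dis}, the assumed sample size lies below the critical threshold for $\cH$, so every learner---in particular $\cA$---fails to recover at least one orbit element $h^\star = g_0 h \in \cH$ with probability at least $1/4$.

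Next, I would use $G$-equivariance to show that the failure probability of $\cA$ is constant along the orbit. Write $\cD_f$ for a size-$n$ dataset with inputs drawn i.i.d.\ from $\rho$ and labels assigned by $f$. By $G$-invariance of $\rho$, the transformed dataset $g \cD_h$ has the same joint distribution as $\cD_{gh}$, hence
\[
\PP[\cA(\cD_{gh}) = gh] = \PP[\cA(g \cD_h) = gh] = \PP[g\, \cA(\cD_h) = gh] = \PP[\cA(\cD_h) = h],
\]
where the middle equality uses the defining identity $\cA(g\cD) = g\cA(\cD)$ and the last step uses invertibility of the $G$-action on functions. In particular, the failure probability on $h$ equals the failure probability on $h^\star = g_0 h$, which is at least $1/4$, giving the claim.

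The hard part will be pinning down the compatibility of the various $G$-actions (on inputs, on labels, on functions, and on whole datasets) precisely enough to justify the distributional identity between $g\cD_h$ and $\cD_{gh}$ without re-deriving it for each of the concrete symmetries (label symmetry, variable symmetry, token-space symmetry) discussed in the text. Once the generic action machinery is spelled out, both this identity and the disagreement change of variable follow from $G$-invariance of $\rho$ alone, and the rest is a short reduction to \cref{thm:dis}.
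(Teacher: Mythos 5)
Your proposal is correct and follows essentially the same route as the paper: apply \cref{thm:dis} to the orbit $\cH=\{gh: g\in G\}$, then use the $G$-symmetry of $\cA$ together with the $G$-invariance of $\rho$ to show the failure probability is constant along the orbit, so the guaranteed failure on some $g_0h$ transfers to $h$ itself. If anything, you are slightly more explicit than the paper on two points it leaves implicit---the change of variables showing that the pairwise disagreement infimum over the orbit reduces to $\inf_{g\ne\mathbf{1}_G}\PP(h(X)\ne gh(X))$, and the vacuous case where that infimum is zero.
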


As said before, multilayer perceptrons trained with gradient descent posses variable and label symmetry.
As such, one gets that \emph{these methods will fail on learning the specific hypothesis $h_{\ge}$ unless the sample size is at least $2^{d/2}$}.
A similar result applies to transformers and of course, $h_{\ge}$ is just one of the many hypothesis which are hard to learn with learners with symmetries like the one mentioned.
Thus, we see that the symmetry of a learner can be used to argue that the learner will take some time before finding a specific hypothesis.
This should make intuitive sense:
A symmetric learner is one that lacks certain biases (e.g., no preference for label zero over label one).
As such, they need to consider various alternatives for which there is a price to pay. 
Conversely, a non-symmetric learner, such as the one that returns $h_{\ge}$ regardless of the training data, has no such limits. Obviously, this particular non-symmetric learner is not desirable unless the goal was to learn $h_{\ge}$.

Our argument that learner symmetry slows exact learning can be seen as a precise, mathematical version of 
one of \citet{fodor1988connectionism}'s main argument, where they
posited that 'generalist' connectionist methods based on 'empiricism' are bound to fail at systematic generalization.
There are however two important differences: Firstly, our argument applies to any symmetric learner, not just connectionist methods. 
Secondly, as we will discuss later, methods that are specialized to exact learning can avoid slow learning even if neural networks are involved.

At this stage it maybe useful to recall that in learning to reason with natural language input, 
we use learning methods because the map from inputs to the desired outputs is not available 
and the best we can do is to evaluate the correctness of specific responses to specific prompts. 
Then the flexibility of natural language forces one to use learning methods that may need to have a lot of symmetries, and as we have seen here, the more symmetries the learner has the harder exact learning will become.
Unsurprisingly, symmetries of learners have been used in the past to construct similar lower bounds in the statistical setting \citep[e.g.,][]{Malach2021-jo,AbbeBo2022sym,Abbe2022-xg}
and a very early intuitive formulation of these ideas can be traced back to at least the work of \citet{mitchell80}. We will add more on the role of symmetries in exact learning in the next section.

\paragraph{Gradient descent can be slow}
The next hurdle we mention here is that 
 convergence (if it happens at all) can be very slow
when neural networks are trained with gradient descent and cross-entropy loss.
This is relevant because this is exactly the procedure that powers modern LLMs (neglecting the specific choice of the gradient-based optimizer used).
This slow convergence is best examined in the special case of training linear classifiers.
In this context, \citet{soudry2018implicit} showed that for linearly separable data the weights converge to the \emph{maximum-margin} linear classifier. This is the classifier that has the property that the hyperplane which defines the boundary between the positive and negative classes is the furthest 
away in the standard Euclidean distance from both the positive and the negative training examples. 
The upper bounds provided by
\citet{soudry2018implicit} suggest the convergence might be rather slow, which is confirmed experimentally (\cref{fig:lin:gd1}), where, to avoid issues with learning rate tuning, we simulated gradient flow. The figure shows for different values of $m$ the evolution of the training loss when learning $h_{\ge}$. Note the exponential scaling of the $x$ axis. The dots indicate when a weight is reached where exact learning is achieved.
\begin{wrapfigure}[28]{r}{0.34\textwidth}
  \centering
  \vspace{0.1em}
  \includegraphics[height=4.4cm,clip]{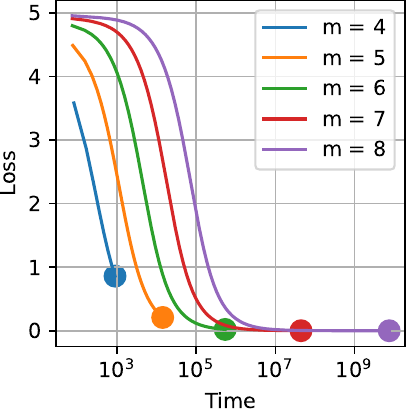}\vspace{-0.3em}
  \caption{Gradient descent is slow to learn exactly on a good teaching set.}\label{fig:lin:gd1}
  \vspace{1.2em}
    \includegraphics[height=4.4cm,clip]{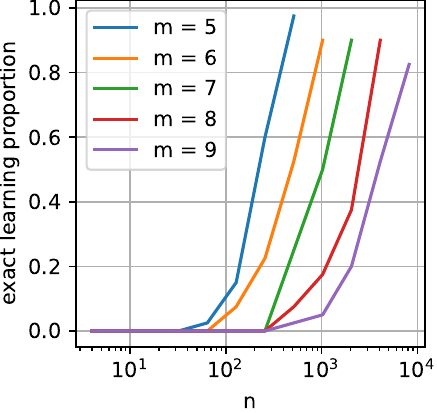}\vspace{-0.3em}
  \caption{A maximum-margin classifier can require many examples.}
  \label{fig:lin:gd2}
\end{wrapfigure}
To make exact learning feasible, we use a small dataset of $O(m)$ examples for which exact learning is guaranteed to be achieved by the maximum-margin classifier (see the next section for details on this). A possible solution is to use ``aggressive'' stepsizes, as suggested by \citet{nacson2019convergence}. It remains to be seen whether such stepsizes can be used in more complex scenarios such as with LLMs.

\paragraph{Surrogate losses may delay or prevent exact learning}
The standard practice to train transformers is to use the cross-entropy loss. For some tasks using this loss may actually  prevent convergence to the exact solution.
As a striking example, \citet{vandenbroeck2023} shows that if the weights of a transformer are initialized to exactly solve a particular logical reasoning problem, training the transformer on instances of the problem using next-token prediction can destroy the exactness of the solution. To understand this, consider the analogous problem of learning $h_{\ge}$ using cross-entropy loss. Gradient descent with this loss eventually converges to the maximum-margin solution, which for datasets that are not either very large or carefully chosen teaching sets does not yield exact learning. \cref{fig:lin:gd2} shows the evolution of the proportion of inputs with correct responses for different values of $m$ and the maximum-margin classifier. Even in this simple problem, the number of examples needed before exact learning is achieved grows exponentially with $m$. The examples here are sampled from a distribution that is uniform on the support vectors of which there are exponentially many. We expect similar phenomenon to appear in the more complex natural reasoning tasks. Note that in this example we see that the bias of a learning method (coming from gradient descent and the cross-entropy loss) negatively interferes with the success of exact learning. Had
 \citet{vandenbroeck2023} used the $0/1$ loss for training, or even possibly the hinge-loss, unlearning would not have happened. Of course the $0/1$ loss is notoriously hard to optimize against and the hinge loss is not a good fit for the multiclass/multilabel setting of next-token prediction tasks.

\section{Beyond the naive approach}
\label{sec:prom}

Next we discuss possible methods for overcoming the limitations of the naive statistical learning approach considered in the previous section.
We focus on directions that are best suited to the specifics of using learning techniques to produce an AI system that needs to use deductive reasoning to ensure its correct operation. What is special about this problem is that we know very well what valid inference looks like. While the inputs and outputs are outside of the realm of formal systems, when an input is sufficiently specific, there is no debate about what the correct solution is.%
\footnote{Here, we would like to acknowledge that problems specified using natural language rarely if ever are fully specific, as it is lucidly illustrated in the little satirical book ``Mathematics Made Difficult'' by \citet{linderholm1971mathematics}, but we think this is not a fundamental issue.
} 
This opens up a number of possibilities. 
While almost all of these are already being explored in the literature, 
new ideas may come from being aware of the difference in demands in exact versus statistical learning. 

\paragraph{Performance evaluation}
In this area, the limited utility of static benchmarks,
or the endless, ad-hoc tinkering of benchmarks should be acknowledged.
In addition, there should be more focus on a systematic understanding of the failures and, more generally, failure modes of current systems. 
Can we develop frameworks where the developed systems are actively challenged?
Or algorithms for verifying correctness, even if the verification is limited in some ways?
One way to start is to draw inspiration from the neighboring literature on adversarial testing 
\citep[e.g.,][]{WoKo18,UKSz19,xu2020adversarial,Leibo21,zolfagharian2023search}, 
formal verification \citep[e.g.,][]{WoKo18,chevalier2024achieving},
mechanistic interpretability \citep{sharkey2025openproblemsmechanisticinterpretability},
or systematic generalization where the standard practice is to use distinct distributions for training and testing; the so-called generalization split \citep{Csordas2023-eb}.

\paragraph{Changing learners}
As it is clear from our previous result, aiming for increased generality (introducing more symmetries to the learner) will delay exact learning. One approach, which was mentioned before, is to remove some symmetries of the learner, by forcing the predictor to be symmetric (equivariant to certain input transformations of the designer's choice), which, at least in the statistical setting, has been shown to reduce sample complexity
\citep{bietti2021sample,tahmasebi2023exact}. 
There are numerous works that explicitly follow this approach
 \citep[e.g.,][]{cohen2016group, zaheer2017deep,scarselli2008graph, battaglia2018relational,bronstein2021geometric}.

\paragraph{Teaching learners}
Since we know what algorithm/hypothesis the learner needs to learn, the learner can be helped by feeding it with data that will make it learn as efficiently as possible.
As an illustration, consider the problem of training a maximum-margin classifier over the binary hypercube $\{0,1\}^d$. As we have seen earlier, this problem may be very challenging when the training data is not carefully selected. 
However, as it turns out, one can prove that 
\emph{no matter the target linear classifier, there always exist a dataset of size at most $2d+2$ such that the maximum margin classifier matches the target} (Proposition~\ref{prop:supp} in the appendix). Intuitively, one needs a few points on the margin that uniquely determine the hyperplane the classifier needs to use.
The dependence of the teaching set on the target function \emph{and the learner} is absolutely essential.
Using the teaching set just mentioned with a multi-layer perceptron instead of the maximum margin classifier is expected to lead to very poor performance. With
such a small amount of data, due to its symmetries, the neural network is not likely to find a solution. Sadly, the more the powerful the learner is, the larger the minimal teaching set will be and calculating a minimal teaching set for a powerful learner may not be feasible.

``Exact learning'' is a term borrowed from the early literature of machine learning. Here, the presence of a teacher is implicit in the model. As introduced by \citet{angluin1988queries}, in this model 
the learner wants to identify a hypothesis from a given class
based on either asking for a counterexample to a hypothesis they have  (``equivalence query''), 
or by asking whether a given point belongs to the target hypothesis (``membership query''). In both cases, we can think of the oracle that gives the answer as the teacher. In the original model, the teacher is doing minimal work.
Indeed, \citet{angluin1987learning} refers to the oracle as the minimally adequate teacher.
However, the problem of designing a good teacher has also been investigated \citep{goldman1993complexity,angluin01,liu2016teaching,Sloan2010-xh,ohannessian2016optimal,liu2017iterative,bharti2024on}.
The main difference is that this literature focuses on the problem of teaching/learning a whole class of hypotheses.
This is unlike our problem where we have a single (but complicated) hypothesis/algorithm to teach the learner, who, however, has to learn other things (because, say, the problems are specified using natural language). Thus, the learner has to be quite general (to be able to deal with the complexities of language), and only part of the training data can be controlled (since the learner also has to learn to understand and use language).

\paragraph{Active learners and active teachers; cooperation}
In the classical learning literature, the learner is directing the learning process \citep{angluin1987learning, angluin1988queries, littlestone1988learning}
In the earliest works, the teacher and the learner did not cooperate; they were assumed to do the minimal amount of work necessary.
The appeal is generality, but this approach can be needlessly wasteful.
\citet{zilles2011models} considers the problem where the learner and teacher can cooperate, though the focus is still on  teaching/learning hypothesis classes.
On the empirical side, much thought went into the related problem of automatic curriculum design \citep[e.g.,][]{dennis2020emergent,portelas2021automatic,parker2022evolving}.
The algorithms in these papers can be seen as attempts to solve a (somewhat generalized) exact learning problem. 
Indeed, these works often refer to the need to achieve compositional/systematic generalization
\citep{kirk2023survey,parker2022evolving}.
One may use learning, sampling and access to a series of models produced during training \citep[e.g.,][]{UKSz19},
or modularity of a hybrid system
\citep[e.g.,][]{dreossi2017compositional}
to find challenging input instances, which can then be fed to training.

\paragraph{Changing the task} 
\begin{figure}[t]
    \centering
    \includegraphics[width=0.8\textwidth, trim={0 4 0 4},clip]{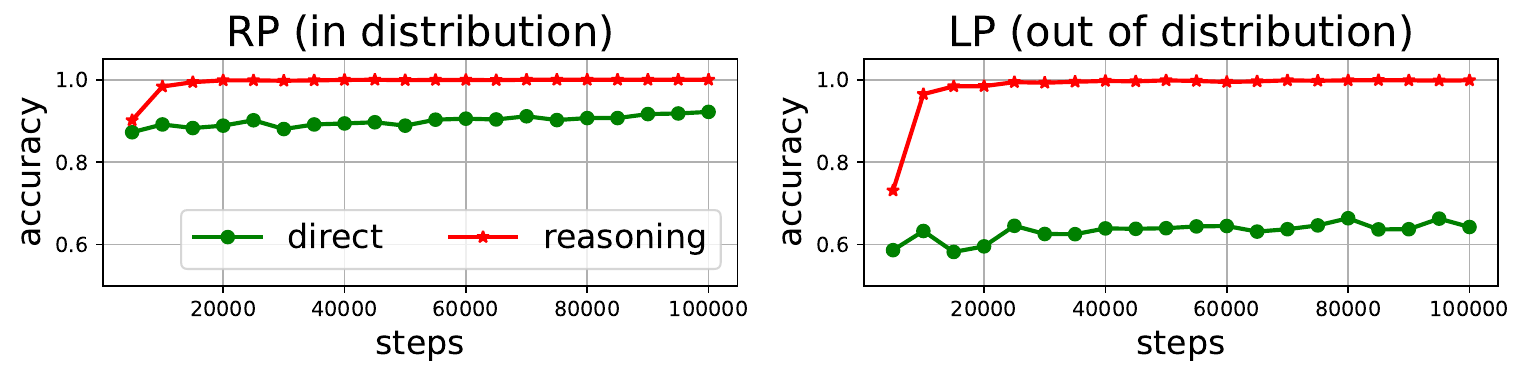}
    \caption{Accuracy on propositional logic problems when training in "direct mode" vs. with reasoning traces. Training on RP data with reasoning traces achieves very high accuracy, but is not exact (1.0 on RP data and 0.999 on LP data.
    }
    \label{fig:reasoning_traces}
\end{figure}
In the more typical passive learning setting with a fixed learner (typically based on gradient descent) and a dataset of labeled examples drawn from a fixed distribution, we may consider changing the task in various ways. 
In some cases, simply changing the loss of the learner may help, as discussed in the section on surrogate losses. 
In the context of learning to reason using next-token prediction, it is a common practice to leave out from the loss computation the tokens that are part of the input.
Contrary to expectations, illustrating the complexity of training LLMs, this has been documented to harm performance
\citep{shi2024instruction,huerta2024instruction}.

Another potential approach involves transforming or augmenting the task and the associated data. 
In the context of LLMs, a particularly promising direction involves learning with chain-of-thought reasoning traces 
\citep{kim2023cot,hou2024universal,li2025llms}.
A gain is expected  if learning the elementary reasoning steps is a lower-complexity learning problem. 
The complexity can be further reduced 
by building a hybrid system that explicitly searches through possible recursive problem decompositions, leaving only the job of suggesting specific problem decompositions and transformations to the learnt system components \citep{yao2023tree,zhang2024tree,zhou2024language}.
By rejecting incorrect solutions, correctness of a system can be ensured by training a verifier, that is, a binary classifier that decides whether a purported solution is correct
\citep{li-etal-2023-making,lightman2023let,kamoi2025training}. 
Training such a verifier may be easier than solving the problem in one-step, especially, if other system components are responsible for breaking up the original problem into small manageable components.

Another example of changing the task (and constructing a teaching set) is given by the work 
of \citet{wang2025learning} who,
in the context of learning to compose functions, 
proved and also empirically demonstrated 
the benefit of training with a ``good mixture'' of problems where the number of functions to be composed was 
appropriately controlled.
Neuro-symbolic approaches attempt to combine the power of symbolic reasoning and learning and many of the approaches tried there can be seen as changing the task
\citep{Garcez2019-xf,Kautz22,hitzler2022neuro,Shakarian2023-pv,garcez2023neurosymbolic}.

To illustrate the power of changing the task,
we consider learning simple logical reasoning with and without reasoning traces.  Here, given some simple rules (definitive clauses) and a set of facts (true zeroth order predicates) the value of a query predicate has to be determined; see Figure~\ref{fig:simple_logic} for an example problem. 
We follow the work of \citet{zhang2023paradox}, which describes two methods for generating such problems, named rule-priority (RP) and label-priority (LP), both of which cover the space and have a similar label distribution. \citet{zhang2023paradox} train a BERT model from scratch on such data (without reasoning) and show that training on RP to near-perfect accuracy does not generalize to LP and vice versa.  We train a decoder-only model on RP problems with and without reasoning traces (see Appendix~\ref{app:experiments} for details), and evaluate on RP and LP. We find that both in-distribution and out-of-distribution performance is much better (near-perfect) with reasoning, though exact learning is still not achieved (see Figure~\ref{fig:reasoning_traces}).  
The key to the improved results is changing the task so that the transformer learns to perform steps of forward chaining rather than implementing multiple steps in-weights.
Teaching the algorithm rather than just presenting a model with input-output pairs effectively changes what statistical shortcuts (if any) the model learns. 

\begin{figure}[t!]
\begin{framed} 
{\color{blue} {\bf Rules:}
 If wrong and frail then impartial .
 If frail then grumpy . 
 {\bf Facts:} frail .
 {\bf Query:} grumpy ? } \newline
{\color{magenta} {\bf Reasoning:}
 Facts: frail .
 If frail then grumpy . 
 Newfact: grumpy .
 Facts: frail , grumpy . } \newline
 {\color{teal} {\bf Answer:} yes}
 \end{framed}
    \caption{An example propositional logic problem with reasoning. We are given a set of rules (definite clauses), a set of facts (true predicates), and a query predicate. The goal is to compute the truth value of the query. The reasoning trace (in magenta) executes the forward-chaining algorithm.}
    \label{fig:simple_logic}
\end{figure}

\section{Alternative views}
\label{sec:alternative}

\paragraph{Existing statistical methods are working} The dominant approach in the field has involved training ever-larger models by hill-climbing on an ever-increasing set of benchmarks using supervised and reinforcement-learning methods. This has led to advances that were hard to imagine even a decade ago. It is possible that this approach eventually leads to exact learning, effectively by hill-climbing to a teaching set. Alternatively, this path may gradually eliminate the "embarrassing" mistakes made by current frontier models and lead to systems that appear logically coherent for all or most practical purposes, with no paradigm shift needed. 
However, as the models become more capable and societies grant them more agency, ensuring exactness becomes a safety concern. Even unlikely mistakes are unacceptable if they lead to a potentially catastrophic outcome. In addition, acknowledging that the goal is exact learning can help to speed up progress 
by clarifying what is important and changing the associated language,
even if the community stays on the path of manual and gradual changes.

\paragraph{Verifying exact learning is very hard} Verifying that the learner has obtained the exact solution is a difficult problem by itself, especially if we consider non-trivial models such as neural networks. Even on the simple problem of comparing two $m$-bit integers discussed earlier, we might need to evaluate the network on all $2^{2m}$ inputs to verify exactness.
An alternative is to develop formal verification methods,
perhaps starting from interpretability research
\citep{sharkey2025openproblemsmechanisticinterpretability}
and/or taking inspiration from the 
research done in developing certified adversarially robust methods  \citep[e.g.,][]{dvijotham2018dual,wong2018provable}.

The verification problem appears especially difficult in the case of natural language reasoning, where there are many different ways to express the same statement. Still, we believe that the situation is not hopeless. For example, language ambiguity can be removed by instructing models to produce output in a particular format. Beyond enumerating all input-output pairs, verification methods could possibly reuse ideas from the literature on systematic generalization where multiple distinct distributions are widely used for training and testing (the ``generalization split'', e.g., \citealt{Csordas2023-eb}),
and also from the adversarial learning literature \citep{szegedy2013intriguing,goodfellow2014explaining}, where
algorithms, that may have access to the models tested, are used to generate examples that may make the models fail. In addition, knowing whether exact learning was achieved is a different goal than designing algorithms that make this happen. That is, there is value in changing the goal to exact learning even if on a specific instance verifying whether learning succeeded completely is hard.
Finally, similarly to what was done in the adversarial robustness literature, one may design methods that also make certification easier \citep[e.g.,][]{carlini2023certified}.

\paragraph{Using symbolic inputs solves the problem}
Using learning to speed up reasoning (more generally, search) has been highly successful.
For this to work, the input to the system needs to be an appropriate ``symbolic'' form.
Examples include AlphaZero \citep{silver2018general}, 
and AlphaProof (\citealt{deepmind2024alphaproof}, without the autoformalization component).
When learning is used this way, the correctness of the whole system is guaranteed -- failure of learning may slow down the system, but it will not make it produce incorrect answers.
Unfortunately, this approach is limited to 
problems where inputs are available in, or are 
easy to transform to a symbolic form, such as in games, 
or formal mathematics.
However, when this is not the case (e.g., the input comes from sensors, or in the form of natural language text), the approach can only be applied when a separate system is learned to translate back and forth between the inputs, the symbolic representation and the outputs.

\paragraph{Using a symbolic backend with a learnt autoformalizer}
Prominent systems following this approach include 
\citep{deepmind2024alphaproof,lin2025goedel}.
While we acknowledge that AI systems can benefit from a hybrid system that combines search with pattern matching and that learning only parts of a system can increase the chance of success, 
at their core, these approaches still rely on a learning component that must perform flawlessly to ensure correct solutions. While the underlying language translation problem may be easier than full-blown reasoning (i.e., it spares a combinatorial search problem), this problem is still highly nontrivial.
As a case in point, \citet{lin2025goedel} reports 
an on-distribution correctness of autoformalization of $76\%$
and an off-distribution correctness of $46\%$ (cf. Table~7).
Other issues with this approach are that the language of the backend, if designed by humans, may be a poor choice that limits the performance of the whole system 
(aligned with the bitter lesson by \citet{sutton2019bitter})
and outside of the narrow domain of mathematics, 
and examples to train the autoformalizer may be even harder to come by.

\section{Conclusion}
\label{sec:conc}
We have argued that the current approach to develop AI systems, heavily reliant on the paradigm of statistical learning, 
is fundamentally misaligned with the demands of general intelligence. 
This misalignment is the most acute when the AI systems are applied in engineering, mathematics, or science, where flawless formal reasoning is paramount. 
We have proposed a pivot towards \emph{exact learning}, which aims for universal correctness. 
We argued that this universal correctness is already on the mind of many in the community.
We hope that following our proposal progress will be sped up 
as the community will be given a better language to describe the problems faced, 
not only raising awareness towards the different demands of exact and statistical learning,
but also facilitating the exploration of new ideas more aligned with the new criterion.
 Towards this goal, important  directions for future research include interactive learning (guided by the teacher, learner, or both) as well as changes to the learning task, such as teaching algorithms step-by-step by explaining the process rather than simply providing input-output pairs. 
Our inspiration is that the new perspective we promote will lead to similar changes as
what happened in other fields after a similar change of goals, one example of which is
the problem of estimation in the presence of heavy tailed data, which led to numerous important new results, including new algorithms
\citep{lugosi2019mean,loh2024theoretical}.

\bibliographystyle{plainnat}

\newpage
\appendix
\onecolumn

\section{Further Related Works}
\label{sec:related}
This paper touches upon several areas of related work, including statistical learning theory, neuro-symbolic AI, systematic generalization in neural networks, classical exact learning, adversarial examples, and out-of-distribution generalization. To avoid repetition, we focus on aspects not already discussed.

Learning to generalize without failure is known as the problem of \emph{systematic generalization} in deep learning \citep{lake2018generalization, bahdanau2018systematic}. Works in this area
 examine the ability of neural networks to generalize to novel combinations of seen concepts, often in the context of tasks that require compositional reasoning.
 While these studies typically focus on empirical evaluations of specific architectures and datasets, our goal is to initiate a systematic theoretical study to understand the challenges of exact learning. 

The challenges of exact learning are also related to the phenomenon of adversarial examples \citep{szegedy2013intriguing, goodfellow2014explaining}, where small, carefully chosen inputs can lead to misclassification, even in high-performing models (and, strikingly, often these inputs are obtained  by slightly changing correctly 
classified inputs, even in the physical world, see, e.g., \citealp{kurakin2017adversarial,hosseini2018semantic}).
These "unsatisfactory glitches" highlight the brittleness of neural networks and their sensitivity to inputs that deviate even slightly from the training distribution.
The main connection between our work and this areas is that exact learning would prevent adversarial examples. However, the main direction of research in this field became the problem of finding smooth predictors \citep{bubeck2023robustness}, and as such bears no direct relevance on our problem. Nevertheless, the underlying phenomenon, that is, the sensitivity of learned classifiers to the data distribution, is the same. 

Our findings are also related to the quest for models that have good out-of-distribution (OOD) generalization capabilities \citep{peters2017elements,arjovsky2019invariant}. In a way, our framework demands the ultimate extreme form of OOD generalization: learning exactly is equivalent to perfect generalization regardless of the target distribution. Our result that shows that for a benign distribution, such as the uniform distribution over a discrete domain, exponentially many examples may be needed for exact learning also gives a hard limit on OOD generalization abilities.

\section{Proofs}

\alllearnersfail*
\begin{proof}
For $h,h'\in \{0,1\}^{\cX}$, let 
\[
A_{h,h'} = \{x \in \cX \,:\, h(x) = h'(x) \}
\]
be the set of points in the domain $\cX$ 
where $h$ and $h'$ coincide, or \emph{agree}.
Now fix $\rho\in \cM_1(\cX)$, a probability distribution over $\cX$, 
that will be used to generate samples. 
Also, fix $h,h'\in \cH$, $h\ne h'$. 
Let $\phi_n(\cA,\cH,\rho)$ denote the probability that the learner will fail on the exact learning problem where $n$ i.i.d. data points are generated from $\rho$, labeled using some hypothesis in $\cH$:
\[
\phi_n(\cA,\cH,\rho) = \sup_{h\in \cH} \PP( \cA(D_h)\ne h )\,,
\]
where $D_h = ((X_1,h(X_1)),\dots,(X_n,h(X_n))$ and $X_1,\dots,X_n\in \cX$ is an i.i.d. sample from $\rho$. For brevity, drop the arguments of $\phi_n$: 
$\phi_n = \phi_n(\cA,\cH,\rho)$.

Now, let $h\ne h'$, $h,h'\in \cH$.
Note that $\cD_h=\cD_{h'}$ if and only if $X_1,\dots,X_n\in A_{h,h'}$.
Now, assume that $X_1,\dots,X_n\in A_{h,h'}$.
Hence, $\cA(\cD_h)=\cA(\cD_{h'})$.
Since $h\ne h'$, it follows that either
$\cA(\cD_h)\ne h$ or $\cA(\cD_{h'})\ne h'$ must hold.
Indeed, if $\cA(\cD_h)\ne h$ does not hold, then
$\cA(\cD_h)=h$. Then, $\cA(\cD_{h'}) = \cA(\cD_h)=h\ne h'$.
Putting things together, we have
\begin{align*}
(\rho(A_{h,h'}) )^n
& = \PP( X_1,\dots,X_n \in A_{h,h'} ) \\
& = \PP( \cD_h = \cD_{h'} ) \\
& \le \PP( \cA(\cD_h)\ne h \text{ or }\cA(\cD_{h'})\ne h' ) \\
& \le \PP( \cA(\cD_h)\ne h ) + \PP( \cA(\cD_{h'})\ne h' ) \\
& \le 2\phi_n \,.
\end{align*}
This essentially establishes the quantitive result we were looking for.

To get a bound that is easier to compute, we further lower bound $(\rho(A_{h,h'}))^n$. For this, we will use that $x \mapsto (1-x)^n$ is convex on $0\le x \le 1$ and hence lies above its first order approximation, $1-nx$, taken at $x=0$.
Hence, with $x = 1-\rho(A_{h,h'})$, we get 
$(\rho(A_{h,h'}))^n  \ge 1- n (1-\rho(A_{h,h'}))$.
Taking the supremum over $h\ne h'$, $h,h'\in \cH$ finishes the proof.
\end{proof}

\symlearnersfail*
\begin{proof}
Assume that $\cA$ is $G$-symmetric. Fix $\rho\in \cM_1(\cX)$ and $h\in \{0,1\}^{\cX}$
and let $\cH = \{ g h \,: g\in G\}$.
Denote the probability that $\cA$ fails on identifying a specific labeling function $h'$ when the input data comes from $\rho$ by $\phi_n(\cA,\rho,h')$.
For simplicity assume that if $X\sim \rho$ then $g X$ has the same distribution $\rho$, regardless of the choice of $g\in G$ (such distributions can be always found by symmetrizing any distribution, which is possible when $G$ is not too big).
By the $G$-symmetry of $\cA$, $\phi_n(\cA,g \rho, g h)=\phi_n(\cA,\rho,h)$ for any $g\in G$.
Since $\rho$ is $G$-symmetric, $\phi_n(\cA,g \rho, g h)=\phi_n(\cA,\rho,gh)$.
Hence, $\phi_n(\cA,\rho,h) = \phi_n(\cA,\rho,gh)$ for any $g\in G$.
By \cref{thm:dis}, $\sup_{g\in G} \phi_n(\cA,\rho,g h)\ge 1/4$ when $n\le 1/(2 \inf_{g\in G, g\ne \mathbf{1}_G} \PP( h(X)\ne (g h)(X))$ where $X\sim \rho$ and $\mathbf{1}_G$ is the identity of the group $G$.
However, since 
$\phi_n(\cA,\rho,h) = \phi_n(\cA,\rho,gh)$ for any $g\in G$, it follows that when
 $\sup_{g\in G} \phi_n(\cA,\rho,g h)\ge 1/4$ then 
  $\phi_n(\cA,\rho,h)  = \sup_{g\in G} \phi_n(\cA,\rho,g h)\ge 1/4$ also holds.
Thus, if $\cA$ is a learner that is $G$-symmetric then the critical sample size will be lower bounded by
$1/(2 \inf_{g\in G, g\ne \mathbf{1}_G} \PP( h(X)\ne (g h)(X))$, the reciprocal value of the smallest disagreement probability between $h$ and its transformation $gh$ where $g$ ranges through the elements of $G$ except for the identity element of $G$.
\end{proof}

In the text, we made the following claim:  

\begin{proposition}\label{prop:supp}
Suppose that $f \in \cFlin(\cX)$, a linear classifier with finite domain $\cX\subset \R^d$, i.e., $f(x)=\sgn(\ip{w,x}) + b$ for some $w\in \R^d$ and $b \in \R$. 
Then there exists a dataset $\cD$ with $|\cD| \leq 2d+2$ such that the maximum-margin classifier on $\cD$
is equal to $f$.
\end{proposition}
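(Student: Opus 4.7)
The plan is to exhibit a teaching set $\cD\subseteq \cX$ of size at most $2d+2$ whose SVM solution reproduces $f$ on all of $\cX$. Split the domain by the labeling: let $P=\{x\in\cX:f(x)=+1\}$ and $N=\{x\in\cX:f(x)=-1\}$. Since $f$ is a linear classifier separating $P$ and $N$, the convex hulls $\conv(P)$ and $\conv(N)$ are disjoint. Because both sets are finite, we can pick a closest pair $p^*\in\conv(P)$, $n^*\in\conv(N)$, i.e.\ minimizers of $\|p-n\|$ over $\conv(P)\times\conv(N)$. The hard-margin SVM over the labeled full set $P\cup N$ is exactly the perpendicular bisector hyperplane of the segment $[n^*,p^*]$ with normal $w^*=p^*-n^*$; this is a standard geometric characterization of the maximum-margin separator.

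Next, I would shrink the data to a small teaching set using Carath\'eodory's theorem. Since $p^*\in\conv(P)\subseteq\R^d$, it can be written as a convex combination of at most $d+1$ points $P^*\subseteq P$; similarly $n^*$ is a convex combination of at most $d+1$ points $N^*\subseteq N$. Define $\cD=\{(x,+1):x\in P^*\}\cup\{(x,-1):x\in N^*\}$, so $|\cD|\le 2(d+1)=2d+2$.

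The key step is to verify that the maximum-margin classifier on $\cD$ coincides with $f$ everywhere on $\cX$. The distance between $\conv(P^*)$ and $\conv(N^*)$ is at most $\|p^*-n^*\|$ (since $p^*\in\conv(P^*)$, $n^*\in\conv(N^*)$) and at least $\|p^*-n^*\|$ (since $\conv(P^*)\subseteq\conv(P)$ and $\conv(N^*)\subseteq\conv(N)$, so the minimum over the smaller product cannot be strictly less than over the larger one). Hence $(p^*,n^*)$ is again the closest pair on $\cD$, and the max-margin hyperplane on $\cD$ is exactly the perpendicular bisector of $[n^*,p^*]$, the same hyperplane produced from the full data. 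Since this hyperplane places every point of $P$ on the $+$ side and every point of $N$ on the $-$ side (by definition of a separator with this normal), it induces the same classifier as $f$ on $\cX$.

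The only real subtlety is the Carath\'eodory reduction while keeping $p^*, n^*$ in the smaller hulls; I would also briefly note the degenerate cases where either $P$ or $N$ is empty (take any one or two points with the appropriate label; such corner cases are trivially under the budget) and remark that the sign convention of the SVM matches $f$ automatically because both assign $+1$ to $P^*$ and $-1$ to $N^*$. I do not expect any step beyond the closest-pair/Carath\'eodory argument to cause difficulty.
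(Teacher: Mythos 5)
Your proof is correct, but it takes a genuinely different route from the paper's. You work in the ``dual'' geometric picture: the maximum-margin separator is the perpendicular bisector of the segment joining the closest pair $(p^*,n^*)$ between $\conv(P)$ and $\conv(N)$, you apply Carath\'eodory separately to $p^*$ and to $n^*$ to extract at most $d+1$ points from each class, and you conclude by sandwiching the distance between the reduced hulls to show the closest pair (hence the bisector) is unchanged. The paper instead works in the primal: it writes the max-margin weight as the minimum-norm point of the polyhedron $P(\cD_0)=\{w:\ip{x-x',w}\ge 2\}$, notes by first-order optimality that $-w_\star$ lies in the normal cone generated by the active constraint directions, and applies Carath\'eodory to that conic representation to keep at most $d+1$ difference vectors, i.e.\ at most $2d+2$ data points; retaining these preserves the normal-cone containment and hence the optimizer. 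The two arguments are dual views of the same KKT structure, and both land on the same $2(d+1)$ count, but yours has the advantage of being more elementary and visual (no normal cones, and the bias term is handled automatically by the bisector), while the paper's generalizes more directly to other norm-regularized formulations since it only manipulates the feasible polyhedron. The only points you should make explicit in a full write-up are the ones you already flag: the uniqueness of the resulting hyperplane when the closest pair is not unique (the difference vector $p^*-n^*$ is the unique minimum-norm point of the Minkowski difference, and all closest pairs share the same midpoint projection along it), and the degenerate single-class case, which the paper's proof also glosses over.
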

\begin{remark}
Let $ w_\star(\cD)$ denote the weights of the maximum margin classifier when it is fed with data $\cD$.
Let $\cD_0 = \{(x, f(x)) : x \in \cX\}$ be the dataset associated with the entire input space. The vectors $\{x : (x, y) \in \cD\}$ constructed in the proof of \cref{prop:supp} are support vectors associated with $w_\star(\cD_0)$ in the sense that $f(x) \ip{x, w_\star(\cD_0)} = 1$. There can be an enormous number of support vectors and the set $\cD$ is a carefully chosen subset.
On the other hand, when $\cX = \R^d$,
\citet{liu2016teaching}
show that a teaching set of a single element always exist (due to the homogenity of the separating hyperplane, although the example in the teaching set may not be a genuine data point). While interesting, this is not suitable for applications where the domain is inherently discrete, as is our case when we consider reasoning tasks.
\end{remark}

\begin{proof}[Proof of \cref{prop:supp}]
Given a dataset $\cD$ let $\cD^\delta = \{x - x' : x,x' \in \cD, f(x) = 1, f(x') = -1\}$. Let 
\begin{align*}
P(\cD) = \{w \in \R^d : \ip{\delta, w} \geq 2 \,, \forall \delta \in \cD^\delta\}
\end{align*}
Since $f \in \cFlin(\cX)$, the set $P(\cD_0)$ is non-empty.
By definition,
\begin{align*}
    w_\star(\cD_0) = \argmin_{w \in P(\cD_0)} \frac{1}{2} \norm{w}^2 \,.
\end{align*}
By the definition of $P(\cD_0)$, $w_\star(\cD_0) \neq \zeros$. 
By the first-order optimality conditions, for all $w \in P(\cD_0)$,
\begin{align*}
    \ip{w_\star(\cD_0), w - w_\star(\cD_0)} \geq 0 \,.
\end{align*}
Equivalently, $-w_\star(\cD_0)$ is in the normal cone of $P(\cD_0)$ at $w_\star(\cD_0)$.
Since $\cX$ is finite the normal cone at $w_\star(\cD_0)$ is
\begin{align*}
\cone(N) \text{ with } N = \{-\delta : \delta \in \cD_0^\delta, \ip{\delta, w_\star(\cD_0)} = 2 \}\,.
\end{align*}
Hence there exists a function $\lambda : N \to [0, \infty)$ such that
\begin{align*}
-w_\star(\cD_0) = \sum_{\eta \in N} \lambda(\eta) \eta \,.
\end{align*}
Moreover, by Carath\'eodory's theorem \citep[Theorem 17.1]{Roc70}, $\lambda$ can be chosen to be supported on not more than $d+1$ elements of $N$.
Since each element in $N$ is the difference of two vectors in $\cD_0$ there exists a set $\cD$ with size at most $2d+2$ such that $-N \subset \cD^\delta$.
But then $-w_\star(\cD_0)$ is in the normal cone of $P(\cD)$ at $w_\star(\cD_0)$ and hence
\begin{align*}
    w_\star(\cD_0) = \argmin_{w \in P(\cD)} \frac{1}{2} \norm{w}^2 = w_\star(\cD) \,.
\end{align*}
Hence $w_\star(\cD) = w_\star(\cD_0)$.
The result follows since $\hat f(\cdot | w_\star(\cD_0)) = f$ by definition.
\end{proof}

\section{Experiment details}
\label{app:experiments}

\paragraph{Architecture and training.} Our implementation of transformer training and evaluation builds upon the NanoDO framework \citep{nanodo}.  For the simple logic experiments, we train a decoder-only transformer with 10 layers and embedding size 1024 split across 8 heads. We use MLPs with hidden size 1024 and GELU activations \citep{hendrycks2016gaussian}. We use RoPE positional embeddings \citep{su2024roformer}.  We train using the AdamW optimizer \citep{loshchilov2017decoupled} with batch size 256, peak learning rate 0.0001, warmup and final rate of 0.00001, and 2000 warmup steps. We clip gradients by global norm 1. We use a custom tokenizer designed for logic data. The vocabulary includes the 150 zeroth order predicates used in \citet{zhang2023paradox} and the following tokens: \{'Facts:', 
    'Rules:',
    'Query:',
    'Answer:',
    'Reasoning:',
    'Newfact:',
    'yes',
    'no',
    'and',
    'Is',
    'Since',
    'If',
    'then',
    '?',
    '.',
    ',',
    'No\_other\_facts\_can\_be\_proven'\}.

\paragraph{Data.} We generated propositional logic problems using the recipes described in \citet{zhang2023paradox}. RP problems are generated by randomly sampling a subset of predicates, and randomly sampling facts and rules using those predicates. The facts and rules are then used to compute the truth values, and a query is selected at random. LP problems are generated by first assigning truth values to a subset of predicates, and then sampling facts and rules consistent with those values. 
To ensure balanced labels, the query was selected uniformly at random from the true predicates with probability 0.5.
The problems we generated had between 5 and 20 predicates, between 0 and 40 rules, and reasoning depth (number of forward chaining steps) up to 6. Rather than using a fixed dataset, we continuously generated data on-the-fly. We generated reasoning traces of the form in Figure~\ref{fig:simple_logic} by running forward-chaining until the query was proven true, or until no further predicates could be proven true. Rather than predicting all tokens, we used conditional training where we only predicted the tokens following the prompt, i.e. following the '?' token.

\paragraph{Evaluation.} We evaluate the trained models on a subset of 2048 randomly generated RP and LP problems, using greedy decoding.

\end{document}